\documentclass[letterpaper, 10 pt, conference]{ieeeconf}  

\IEEEoverridecommandlockouts                              

\overrideIEEEmargins                                      

\hyphenation{op-tical net-works semi-conduc-tor}

\usepackage[dvipsnames]{xcolor}

\usepackage{cite}
\usepackage{comment}
\usepackage{tabularx}
\usepackage[latin1]{inputenc}
\usepackage{physics,amsmath}
\usepackage{amsfonts}
\usepackage{amssymb}
\usepackage{upgreek}
\usepackage{graphicx}
\usepackage{mathtools}
\usepackage{calc}
\usepackage{subcaption}
\usepackage{balance}
\usepackage{multirow}
\usepackage{hyperref}
\usepackage{booktabs}
\usepackage[linesnumbered, ruled,vlined]{algorithm2e}
\usepackage{tabularx}
\usepackage[ruled,vlined]{algorithm2e}
\usepackage{float}
\newtheorem{proposition}{Proposition}

\title{\LARGE \bf Automated Feature Selection for Inverse Reinforcement Learning }

\author{Daulet Baimukashev$^{1}$, Gokhan Alcan$^{2}$, Ville Kyrki$^{1}$
\thanks{This work was supported by the Academy of Finland under grant 347199.  (Corresponding author: Daulet Baimukashev)}%
\thanks{$^{1}$The authors are with the Intelligent Robotics Group, Department of Electrical Engineering and Automation (EEA), Aalto University, 02150, Espoo, Finland. (e-mail: {\tt\footnotesize daulet.baimukashev@aalto.fi, ville.kyrki@aalto.fi}) }
\thanks{$^{2}$The author is with the Faculty of Engineering and Natural Sciences, Automation Technology and Mechanical Engineering, Tampere University, Finland
(e-mail: {\tt\footnotesize gokhan.alcan@tuni.fi})}%
}

\begin{document}

\maketitle
\thispagestyle{empty}
\pagestyle{empty}

\begin{abstract}
Inverse reinforcement learning (IRL) is an imitation learning approach to learning reward functions from expert demonstrations. 
Its use avoids the difficult and tedious procedure of manual reward specification while retaining the generalization power of reinforcement learning.
In IRL, the reward is usually represented as a linear combination of features. 
In continuous state spaces, the state variables alone are not sufficiently rich to be used as features, but which features are good is not known in general.
To address this issue, we propose a method that employs polynomial basis functions to form a candidate set of features, which are shown to allow the matching of statistical moments of state distributions. 
Feature selection is then performed for the candidates by leveraging the correlation between trajectory probabilities and feature expectations. 
We demonstrate the approach's effectiveness by recovering reward functions that capture expert policies across non-linear control tasks of increasing complexity.
Code, data, and videos are
available at \url{https://sites.google.com/view/feature4irl}.
\end{abstract}


\IEEEpeerreviewmaketitle

\section{Introduction}

In the evolving landscape of robotics and machine learning, observational data has emerged as a cornerstone for learning and adapting intelligent behavior. It is often more straightforward for the experts to demonstrate a task than to explain it~\cite{ni2021f}. Imitation learning, the primary framework for learning from an expert, can be approached either by directly imitating the policy (behavioral cloning~\cite{torabi2018behavioral}) or by inferring the expert's intention or goal through learning the reward function (inverse reinforcement learning)~\cite{ng2000algorithms}). 

Behavioral cloning, however, suffers from a distributional shift between training and testing data, leading to erroneous actions in unseen states with errors that accumulate quadratically with the number of steps~\cite{xu2020error}. In contrast, inverse reinforcement learning (IRL) aims to recover a suitable reward function that explains expert behavior, subsequently deriving the optimal policy using gradient methods such as reinforcement learning~\cite{sutton2018reinforcement}. IRL offers the additional benefit of circumventing the challenge of manually designing appropriate reward functions for tasks~\cite{abbeel_apprenticeship_2004}, a process prone to incorrect assumptions about task and environment dynamics, potentially resulting in sub-optimal policies or task failure~\cite{DeepMind2020SpecificationGaming}.

\begin{figure}[t!]
  \centering
  \includegraphics[width=\linewidth]{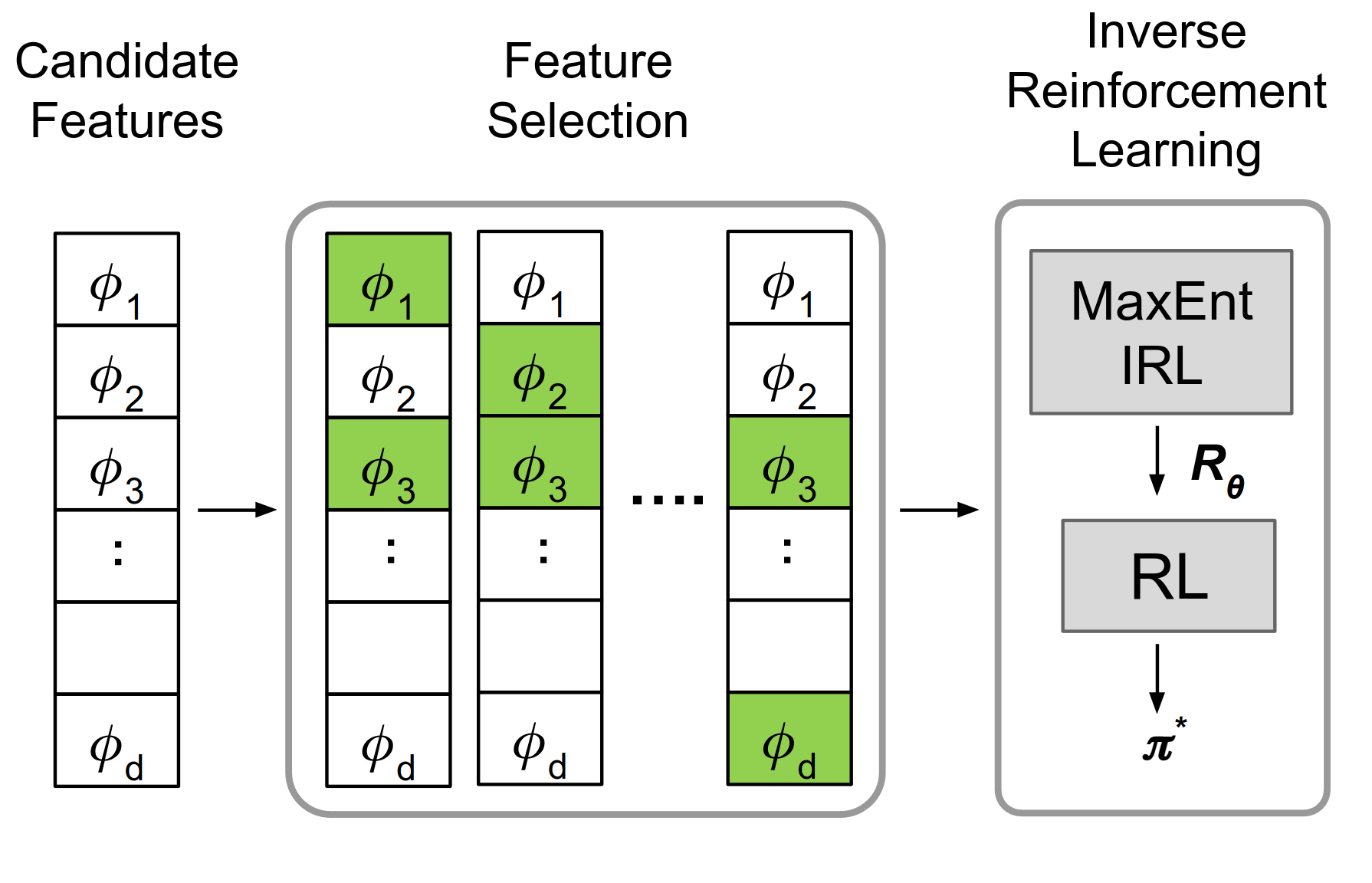}
  \caption{A central open challenge in inverse reinforcement learning is the choice of suitable features to represent the reward. We propose a method that constructs a candidate feature set and then selects a subset that best describes expected rewards.}
  \label{fig:framework}
\end{figure}

A systematic approach to formulating rewards in IRL for high-dimensional and/or continuous state space involves representing the reward function as a linear combination of relevant features~\cite{abbeel_apprenticeship_2004}. This formulation is flexible, as features can be nonlinear functions of states, but it necessitates identifying---usually manually---relevant state features to represent the reward function. Limited works in the literature explore methods for choosing suitable features. 

We propose to use polynomials as a candidate set for features. To limit the dimensionality, we furthermore propose an approach for selecting a subset of features that facilitates reward learning using regression methods. 

The primary contributions of the paper are:

\begin{enumerate}
    \item Showing that polynomial basis functions are effective as a candidate set of features, as they enable matching the statistical moments of the states between demonstrations and the retrieved policy,
    \item Developing an efficient feature selection mechanism that automatically selects the most relevant features through a correlation-based technique, favoring a smaller set of features to reduce reward complexity and mitigate the effects of noise and spurious correlations in IRL,
    \item Validating and evaluating the proposed feature generation and selection method by successfully retrieving the reward and corresponding expert policy for given expert demonstrations across multiple tasks of increasing complexity.
\end{enumerate}

\section{Related Works}

\textbf{Inverse reinforcement learning (IRL)} for inferring reward functions in continuous state spaces was pioneered by Ng et al.~\cite{ng2000algorithms} addressing the critical challenge of reward ambiguity where multiple reward functions can lead to the same optimal policy. This ambiguity was notably tackled by Ziebart et al.\cite{ziebart2008maximum} through a maximum entropy formulation that employs a probabilistic approach to reward determination, targeting a single policy that mirrors the training data distribution, albeit constrained to finite state spaces. The shift towards accommodating the continuous state spaces prevalent in robotics prompted approaches like state space discretization~\cite{boularias2011relative,qiao2011inverse,choi2011inverse}, path integral extensions~\cite{aghasadeghi_path_integral}, and more recent solutions leveraging orthonormal basis functions~\cite{dexter2021inverse}, Lyapunov theory~\cite{tesfazgi2021inverse}, and receding horizon strategies~\cite{xu2022receding} to tackle the complexity of continuous environments.

\textbf{Feature selection} for reward functions, traditionally a manual process reliant on domain expertise and characterized by trial-and-error, has posed significant challenges in terms of time and efficiency~\cite{phan2022driving,kuderer2015learning}. While Gaussian processes~\cite{levine2011nonlinear} and deep neural networks have been explored for feature learning, with adversarial methods~\cite{Finn2016AdversarialIRL, fu2017learning} emerging to mimic expert trajectories, such strategies face challenges related to data demands, training stability, and generalizability. The quest for compact, interpretable reward functions, less susceptible to adversarial issues, was partially addressed by Levine et al.~\cite{levine2010feature} through an iterative feature construction process. However, also this relied on a predefined set of basic functions or atomic features.

Our contribution distinctly advances the field by introducing an efficient algorithm that automates the selection of a concise set of features from a polynomial basis, addressing both the challenge of reward ambiguity and the inefficiencies in feature selection. This innovation not only streamlines the reward design process in IRL but also enhances the method's applicability and interpretability across various continuous state spaces, setting a new precedent for the development of intelligent systems in robotics.

\section{Background}
Reinforcement learning (RL) is an approach to solve Markov Decision Processes (MDP) defined as a tuple $\mathcal M = \langle \mathcal S, \mathcal A, \mathcal R, \mathcal T, \mathcal \gamma \rangle$, where $\mathcal S$ is the set of states, $\mathcal A$ the set of actions, $\mathcal R$ is the reward function, $\mathcal T$ represents transition dynamics, and $\mathcal \gamma$ is a discount factor. The solution represented by an optimal policy is defined as the one maximizing the expected discounted cumulative reward of the trajectory starting from any initial state $s$:
\begin{equation}\label{eq:pi_star}
\pi^*(s) = \underset{a}{\arg\max} \ \mathbb{E} \left[ \sum_{t=0}^{\infty} \gamma^t R_{t+1} \bigg| S_0 = s, A_0 = a \right]
\end{equation}
In the RL setting, the transition dynamics are unknown, and the policy is optimized through exploration (trial and error), observing states, actions, and rewards.

In contrast, we consider \emph{inverse reinforcement learning} (IRL) where the objective is---in the same setting---to infer a reward function that is compatible with given expert state-action trajectories $\tau=(s_0, a_0, \ldots, s_N, a_N)$ forming a dataset $D$ = $\{\tau_1, \tau_2, \ldots, \tau_k \}$. IRL is a fundamentally ill-defined (under-constrained) problem in that infinitely many reward functions are compatible with any dataset.

To constrain the problem, representing the reward as a linear combination of features $\phi(s)=(\phi_0(s),\ldots,\phi_d(s))$ has been proposed~\cite{ng2000algorithms} so that
\begin{equation}
\label{r_s}
R(\vb{s}) = \vb{\theta}^T \phi(\vb{s})
\end{equation}
where $\vb{\theta}$ represents the weight of each feature. Due to the linearity, the cumulative reward of a trajectory can then be defined as 
\begin{equation}\label{eq:r_tau}
    R(\tau) = \vb{\theta}^T \phi(\tau),
\end{equation} where $\phi(\tau) = \sum_{\vb{s_i} \in \tau}^{}  \phi(\vb{s_i})$. With this formulation, the optimal policy must match the feature expectations with training data~\cite{abbeel_apprenticeship_2004}, that is,
\begin{equation}\label{eq:exp_data}
    E_{D}[\phi(s)]=E_{\pi} [\phi(s)].
\end{equation}

Now, the focus of our work is on choosing which set of features is used given a training dataset. 

\section{Method}

\begin{algorithm}[t] 
\caption{Inverse Reinforcement Learning with Feature Selection}\label{alg:cap}

\SetAlgoLined
\SetKwInOut{Input}{input}\SetKwInOut{Output}{output}

\Input{$\mathcal M \backslash r$, expert data $D$, simulator $E$, epoch $M$, learning rate $\alpha$, number of trajs $N$}
\Output{Reward $r$, policy $\pi$}

$\mu_e \leftarrow  \text{compute\_features}(D)$

Generate candidate feature set $\Phi$ \tcp{\small{(Sec. \ref{sec:cand_feats})}} 
Fit kernel density function to $D$

Generate labels  $H$=$\{\log P(\tau_i) \mid i = 0, 1, 2, \ldots, N\}$

$\phi(\cdot) \leftarrow \text{rank\_features}(\Phi, H)$ \tcp{\small{(Sec. \ref{sec:feat_select})}}

Initialize $\theta$ $\sim Unif[-1, 1]$


\For{$i=0$ \KwTo $M$}{


    $r \leftarrow \text{construct\_reward}(\theta, \phi)$ 
    
    Configure simulator $E$ with new $r$

    $\pi \leftarrow \text{learn\_policy}(E, r)$ \tcp{\small{(Sec. \ref{policy_extract})}}
    
    $G \leftarrow \text{collect\_rollouts}(\pi, E, N)$
    
    $\mu_a \leftarrow  \text{compute\_features}(G)$

    $\nabla L(\theta) = \mu_e - \mu_a$ \tcp*{\small{loss gradient}}

    $\theta \leftarrow \theta - \alpha \nabla L(\theta)$ \tcp*{\small{update $\theta$}}
    

}

\end{algorithm}

In this section, we present the proposed method for inverse reinforcement learning, focusing on the choice of features. 
The entire approach is described in Algorithm 1. 
We begin by proposing to use polynomials as candidate features (lines 1-2, Sec. IV-A). Next, we present the feature selection method (line 3, Sec IV-B), which selects subset of relevant features.
We then present how the feature weights can be optimized through maximum entropy IRL (Sec. IV-C) using regular RL as a component (Sec. IV-D).

\subsection{Polynomial features }\label{sec:cand_feats}

We propose to use quadratic polynomials of the state as candidate features. Thus,
\begin{equation}\label{eq:phi_s}
    \phi(\mathbf{s})=
        \begin{pmatrix}\mathbf{s}\\
        vec(\mathbf{s} \mathbf{s}^T)
    \end{pmatrix}
\end{equation}
where $vec(\cdot)$ denotes the vectorization of a matrix. To argue for the choice, we next show that this corresponds to matching the statistical moments of the state up to second order (mean, covariance) between demonstrations and retrieved policy.

\begin{proposition}
Matching the expectations of features consisting of second-order polynomials leads to matching the mean and variance of the distributions. 
\end{proposition}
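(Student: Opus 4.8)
The plan is to unpack the feature-expectation matching condition \eqref{eq:exp_data} componentwise for the specific feature map \eqref{eq:phi_s} and show that it is equivalent to equality of the first two moments. First I would observe that $\phi(\mathbf{s})$ stacks two blocks: the linear block $\mathbf{s}$ and the quadratic block $\mathrm{vec}(\mathbf{s}\mathbf{s}^T)$, whose entries are all pairwise products $s_i s_j$. Since expectation is linear and acts entrywise on a vector-valued function, the condition $E_D[\phi(s)] = E_\pi[\phi(s)]$ splits into $E_D[\mathbf{s}] = E_\pi[\mathbf{s}]$ (the linear block) together with $E_D[s_i s_j] = E_\pi[s_i s_j]$ for all $i,j$ (the quadratic block), i.e. $E_D[\mathbf{s}\mathbf{s}^T] = E_\pi[\mathbf{s}\mathbf{s}^T]$.

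Next I would convert the equality of second raw moments into equality of covariances via the identity $\mathrm{Cov}(\mathbf{s}) = E[\mathbf{s}\mathbf{s}^T] - E[\mathbf{s}]\,E[\mathbf{s}]^T$. Writing $\mu_D = E_D[\mathbf{s}]$ and $\mu_\pi = E_\pi[\mathbf{s}]$, the linear block already gives $\mu_D = \mu_\pi =: \mu$, so the outer-product correction term $\mu\mu^T$ is the same on both sides; subtracting it from the (equal) raw second moments yields $\mathrm{Cov}_D(\mathbf{s}) = \mathrm{Cov}_\pi(\mathbf{s})$. Thus matching the expectations of the second-order polynomial features is equivalent to matching both the mean vector and the full covariance matrix (in particular all variances) of the state distribution under $D$ and under $\pi$. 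I would also note the converse direction holds trivially, so the correspondence is an ``if and only if,'' which strengthens the justification for the feature choice.

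There is no serious obstacle here — the argument is essentially bookkeeping — so the main care is in the exposition rather than any hard step. The one subtlety worth flagging is that $\mathrm{vec}(\mathbf{s}\mathbf{s}^T)$ contains redundant entries ($s_i s_j = s_j s_i$), so one should remark that this redundancy is harmless: it only means the quadratic block is not of full rank, but the set of distinct constraints it imposes is exactly $\{E_D[s_i s_j] = E_\pi[s_i s_j]\}_{i\le j}$, which is precisely what is needed to pin down the symmetric covariance matrix. A second minor point is to make explicit that $E_D[\cdot]$ and $E_\pi[\cdot]$ should be read as the empirical/induced state-visitation distributions consistent with the feature-count notation $\phi(\tau)$ used in \eqref{eq:r_tau}–\eqref{eq:exp_data}, so that ``moments of the distributions'' in the statement refers to these state-occupancy distributions; with that reading the proof is complete.
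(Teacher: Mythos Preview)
Your proposal is correct and follows essentially the same argument as the paper: split the feature-matching condition into the linear block (matching means) and the quadratic block (matching raw second moments), then apply the identity $\mathrm{Cov}(\mathbf{s}) = E[\mathbf{s}\mathbf{s}^T] - E[\mathbf{s}]E[\mathbf{s}]^T$ to conclude equality of covariances. Your additional remarks on the converse direction, the redundancy in $\mathrm{vec}(\mathbf{s}\mathbf{s}^T)$, and the interpretation of $E_D$ and $E_\pi$ are not in the paper's proof but are sound and only sharpen the exposition.
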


\begin{proof}

Substituting $\phi(s)$ from \eqref{eq:phi_s} to \eqref{eq:exp_data} gives
\begin{equation}\label{eq:exp_phi}
    E_{D} \begin{bmatrix}\mathbf{s}\\
                vec(\mathbf{s} \mathbf{s}^T)
            \end{bmatrix}=E_{\pi}\begin{bmatrix}\mathbf{s}\\
                                        vec(\mathbf{s} \mathbf{s}^T)
                                    \end{bmatrix}.
\end{equation}

Taking the expectations componentwise this reduces to 
\begin{subequations}
\label{eq:exp_phi2}
\begin{align}
     E_{D} [\mathbf{s}]&=E_{\pi} [\mathbf{s}] \label{eq:means}\\
     E_{D} [vec(\mathbf{s}^T \mathbf{s})] &=
        E_{\pi} [vec(\mathbf{s} \mathbf{s}^T)].\label{eq:vars}
\end{align}
\end{subequations}
The means of state distributions being equal under dataset and policy follows now trivially from \eqref{eq:means}.
Now, note that the covariance matrix of the state is defined as 
\begin{equation}
cov(\mathbf{s},\mathbf{s})=E[\mathbf{s}\mathbf{s}^T]-E[\mathbf{s}]E[\mathbf{s}]^T.    
\end{equation}
Now, the first term of the right-hand side is equal under dataset and policy due to \eqref{eq:vars} and the second term due to \eqref{eq:means}. Then, the covariance matrices of state distributions are equal under dataset and policy.  

Thus, using polynomials up to second order as features matches the mean and covariance of state distributions for the training dataset and recovered policy. 
\end{proof}

Matching mean and covariance can also be interpreted as matching the Gaussian approximations of state distributions. This motivates the choice, because Gaussians are maximum entropy distributions making the least assumptions about the underlying distribution under known mean and covariance. The dimensionality of the feature set remains also relatively small, $dim(\Phi)=d+d(d+1)/2$ where $d=dim(\mathbf{s})$. However, for a high-dimensional state space, using this many features can still be problematic.

\subsection{Feature selection}\label{sec:feat_select}

A smaller set of features is preferred as it reduces the reward complexity and helps to avoid the effect of noise and pseudo-correlation. Now, we present an efficient method to select only relevant ones.

As shown in~\cite{ziebart2008maximum}, the probability of the trajectory in the expert dataset is proportional to the exponential of the reward of that trajectory, that is,
\begin{equation}
\label{eq_ptau}
    P(\tau_i) | \theta) = \frac{e^{\theta^T  \phi(\tau_i)}}{Z(\theta)}
\end{equation}
where partition function $Z(\theta)$ normalizes the reward function over all possible trajectories. 

Taking logarithm of both sides of  \eqref{eq_ptau}, we get
\begin{equation}
\label{log_tau}
    \log P(\tau_i) | \theta) = \theta^T  \phi(\tau_i) - \log Z(\theta),
\end{equation}
and noting that the partition term $Z$ is constant, it can be omitted: 
\begin{equation}
\label{log_tau_prop}
    \log P(\tau_i) | \theta) \propto \theta^T  \phi(\tau_i).
\end{equation}
Thus, the log probability of trajectories is proportional to a linear combination of features. 

Now, we propose to estimate the left-hand side of \eqref{log_tau_prop}, the probability of trajectories, from the empirical training data, for each trajectory, as follows. To avoid making assumptions on the unknown transition dynamics of the MDP, the probability of the trajectory is expanded into a product of individual state probabilities
\begin{equation}
    P(\tau) = P(s_1) P(s_2) \ldots P(s_n).
\end{equation}

To estimate the probability of individual states $P(s_t)$, we perform kernel density estimation using the entire training dataset. 
The kernel density estimate for state $s$ is defined as
\begin{equation}
\hat{P}(\mathbf{s}) = \frac{1}{|D|} \sum_{\mathbf{t}\in D} K(\mathbf{s} - \mathbf{t}).
\end{equation}
We use the Gaussian kernel function defined as 
\begin{equation}
    K(\mathbf{s}) = \frac{1}{(2\pi)^{d/2} |\Sigma|^{1/2}} \exp\left(-\frac{1}{2} \mathbf{s}^T \Sigma^{-1} \mathbf{s}\right)
\end{equation}
where $\Sigma$ is the covariance parameter related to the kernel width.

After computing the probabilities of trajectories, i.e. left-hand side of the equation \ref{log_tau_prop}, for each trajectory in the training dataset, we construct a set containing labels $H$=$\{\log P(\tau_i) \mid i = 0, 1, 2, \ldots, N\}$. Next, we aim to find which of the candidate features have higher predictive powers for trajectory probability. One way to perform feature selection is to utilize a univariate feature selection method based on statistical tests. In our case, to rank features by their importance we use F-statistics between trajectory features and their probabilities from $H$. Then, features with higher F-statistics are selected for feature extractor $\phi(\cdot)$.

The time complexity of the feature selection is $\mathcal{O}(N)$ w.r.t. feature size. This algorithm is simple yet efficient, as we do not consider many combinations of features as in the case of manual feature exploration. The next step to fully recover the reward function is to find the weights of each of the features which is covered in the next part.

\subsection{Reward retrieval}

We apply maximum entropy IRL~\cite{ziebart2008maximum} to learn the weights of selected features. To maximize the log probability of the observed data, we write the equation as follows: 

\begin{equation}
\theta^* = \underset{\theta}{\arg\max} \sum_{\tau \in D }^{} \log P(\tau | \theta)
\end{equation}

If we take the derivative of the log-likelihood, the loss function becomes

\begin{equation}\label{grad_cost}
    \nabla L(\theta) = \mu_e - \sum_{i=1}^{m} \phi(\tau'_{i}) = \mu_e - \sum_{i=1}^{m} \sum_{\vb{s_i} \in \tau}^{}  \phi(\vb{s_i}) 
\end{equation}
where $\mu_e$ is the feature expectation of the training data. The weights of the reward can be updated by using the gradient descent algorithm as follows:

\begin{equation}\label{theta_update}
  \theta  \leftarrow \theta - \alpha  \nabla L(\theta)
\end{equation}
where $\alpha$ is the learning rate.

\subsection{Policy extraction}\label{policy_extract}
We use a reinforcement algorithm to extract the expert policy that maps states to actions by maximizing the given reward function. The algorithm finds an optimal policy $\pi^*$ using the formulation from Eq. \ref{eq:pi_star}. Particularly, we use the proximal policy optimization (PPO) method from~\cite{schulman2017proximal} and soft-actor-critic (SAC)~\cite{haarnoja2018soft} algorithm. PPO is a prevalent method in many cases due to lower sensitivity to hyperparameters, while SAC is sample-efficient and works with continuous action spaces.

\section{Experiments}

In the experiments, we address the following research questions:
\begin{itemize}
  \item Is the candidate set of basis functions using polynomials sufficient to solve nonlinear control tasks?
  \item  What is the performance of inverse reinforcement learning compared to reinforcement learning with known reward?
  \item What is the performance of the proposed method compared to baselines?

\end{itemize}

\subsection{Setup}
We conduct experiments using the following three Gymnasium~\cite{towers_gymnasium_2023} environments: Pendulum-v1, CartPole-v1, and Acrobot-v1 with an increasing number of states space. The Fig.\ref{fig:agents} shows the snapshots of the environments and size of the candidate feature set. Below are short descriptions of the task and the \textit{true} reward functions:
\begin{enumerate}
    \item \textbf{Pendulum} \\ 
        The task is to swing upwards and stabilize the pendulum at vertical position. The \textit{true} reward is the negative sum of the square of the angle, angular velocity, and torque applied.
    \item \textbf{CartPole} \\ 
        The task is to stabilize the pole mounted on a cart and keep the positon of the cart closer to the center of the scene. The \textit{true} reward is a scalar value {+1} for each step that the pole is upright. 
    \item \textbf{Acrobot} \\ 
        The task is to reach certain height threshold with two-link system. 
        The \textit{true} reward is a scalar value {-1} for each step for not reaching the target. \\
\end{enumerate}

\begin{figure}[!t]
\centering
\captionsetup{justification=centering}

\begin{subfigure}{.32\columnwidth}
  \centering
  \includegraphics[height=0.1\textheight]{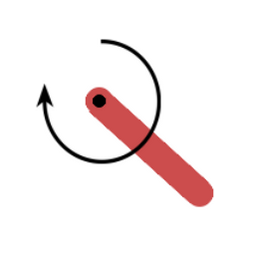}
  \caption{Pendulum-v1.\\ \textit{dim($\Phi$)} = 9}
\end{subfigure}
\begin{subfigure}{.32\columnwidth}
  \centering
  \includegraphics[height=0.1\textheight]{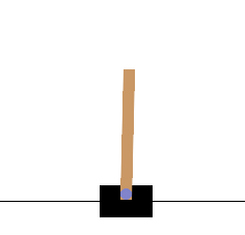}
  \caption{CartPole-v1.\\ \textit{dim($\Phi$)}  = 14}
\end{subfigure}
\begin{subfigure}{.32\columnwidth}
  \centering
  \includegraphics[height=0.1\textheight]{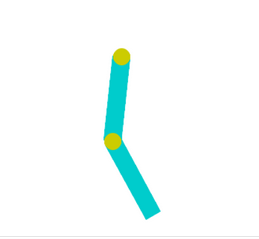}
  \caption{Acrobot-v1. \\ \textit{dim($\Phi$)}  = 27}
\end{subfigure}

\caption{Benchmark tasks used in this paper.}
\label{fig:agents}
\vspace*{-15pt}
\end{figure}

\subsection{Data collection}
To collect the expert or training data, we train RL algorithms for these three environments. Particularly, we used SAC for the Pendulum environment with a continuous action space, and PPO was chosen for CartPole and Acrobot with discrete action space. We train RL algorithms to maximize the cumulative \textit{true} reward function of the environment. After reaching established benchmark results as presented in Stable Baselines3~\cite{stable-baselines3} for these tasks, we refer to the RL policy as an expert policy and deploy it for data collection. For each of the environments, we collected $N$ number of trajectories, and the starting states of the trajectories were randomly sampled, and simulated trajectories were saved in dataset~$D$. Thus, in our IRL method as shown in Algorithm~\ref{alg:cap}, only dataset $D$ is used, and we assume that expert policy or true reward function is unknown.

\begin{table}[b]
\centering
\begin{tabular}{|l|l|l|l|}
\hline
 & Pendulum & Cartpole & Acrobot \\ \hline
Number of expert trajs & 200& 200& 100\\ \hline
Epochs & 100& 100& 100\\ \hline
Learning rate & 0.2& 0.2& 0.2\\ \hline
Learning decay & 0.97& 0.97&0.97\\ \hline
Total timestep & 8e4 & 1e5 & 0.5e6 \\ \hline
Gamma & 0.9 & 0.97 & 0.99 \\ \hline
Number of envs & 4 & 8 & 4 \\ \hline
Batch size & 64 & 128 & 128 \\ \hline
Number of nodes (MLP) & 64x2 & 96x2 & 400x2 \\ \hline

\end{tabular}
\caption{Training hyperparameters.}
\label{tab:hyperps}
\end{table}

\begin{figure*}
    \centering
    \includegraphics[width=0.95\linewidth]{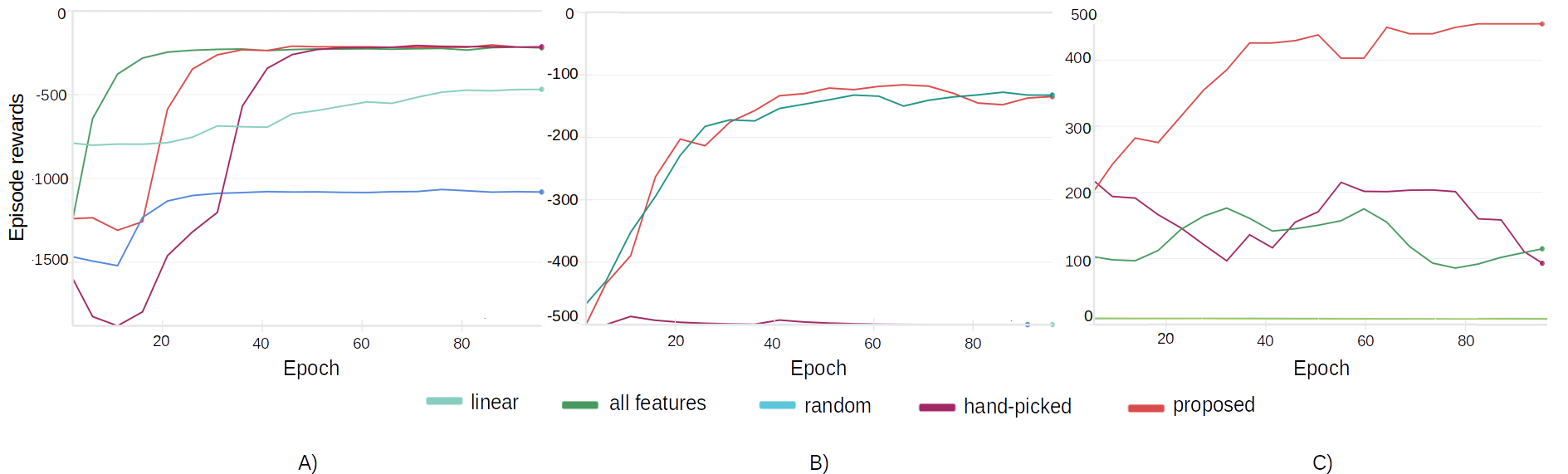}
    \caption{Mean cumulative rewards for policies trained using various feature sets, calculated across 10 different initial conditions. A) Pendulum, B) Acrobot, C) CartPole.}
    \label{fig:episode_rewards}
\end{figure*}

\begin{table*}[!ht]
\centering
\begin{tabularx}{0.9\linewidth}{|l|X|X|X|X|X|X|}
\hline
 & linear & all features & random & hand-picked & \textbf{proposed} & SB3 benchmark \\ \hline
Pendulum & -467.78 & \textbf{-220.81} & -846.87 & \textbf{-214.13} & \textbf{-216.84} & -156.99 $\pm$ 88.7\\ \hline 
Cartpole & 8.4 & 114.71 & 8.41 & 92.86 & \textbf{454.9} & 500.00 $\pm$ 0.0\\ \hline
Acrobot & -500 & \textbf{-132.32} & -499.83 & -500 & \textbf{-134.64} & -73.50 $\pm$ 18.20\\ \hline
\end{tabularx}
\caption{Mean cumulative rewards for policies trained using various feature sets, calculated across 10 test simulations under varying initial conditions. The last column shows the Stable-Baseline3 (SB3) benchmark for the RL policy with \textit{true} reward of the environment. The cells with bold values indicate satisfactory performance.}
\label{res}
\end{table*}

\subsection{Baselines}

In this work, we propose a feature selection mechanism aimed at recovering the reward function that corresponds to the expert's behavior, given the dataset. Therefore, we compare our method against various baseline feature selection strategies: \textit{hand-picked} features using domain expertise and refined through trial-and-error, \textit{randomly} selected features, direct use of states as features (referred to as \textit{linear} features), and inclusion of \textit{all} features in the candidate set. Given that comparing reward functions directly across baselines does not yield a meaningful performance metric in terms of completing a task, we focus on comparing the \textit{policies derived} from these reward functions. To this end, we conduct two comparative analyses. First, we execute the derived policies in multiple testing environments configured by \textit{true} reward functions and observe cumulative rewards. This evaluation is justified because our training data comes from an expert trained with \textit{true} reward function as well. Second, we compare the state distributions of the training data from expert and testing data from the extracted policies. For the divergence measure of multivariate distributions, we exploited the 2D Wasserstein distance metric~\cite{Villani2008}. 

\subsection{Implementation details}

The source code was implemented in Python 3.8. and we have made the code publicly available\footnote{\label{foot:git}https://sites.google.com/view/feature4irl}. We utilized the Stable-baselines3 library~\cite{stable-baselines3} for the training of RL algorithms. As a policy network, we use a multi-layer perceptron (MLP) with two hidden layers. Furthermore, we conducted a thorough hyperparameter optimization for both the RL and inverse reinforcement learning (IRL) parameters. Table\ref{tab:hyperps} summarizes the hyper-parameters used during the training. The total number of training epochs for IRL is 100, and the training dataset consists of 200 trajectories. To facilitate the training, we parallelized data collection and environment simulation using multi-processing techniques. The training was performed using an Aalto high-performance computing cluster.

\subsection{Results}

Figure~\ref{fig:episode_rewards} illustrates the mean cumulative rewards of the 10 environment simulations for each of the tasks. Each line represents the performance of the retrieved policy using the corresponding set of features for the reward functions. We observe that our proposed method (indicated by the red line) achieves benchmark results across all tasks. It is also evident that manually selected features perform sub-optimally in two tasks, namely CartPole and Acrobot. Despite multiple iterations and attempts to manually identify the best features representing the expert reward, this task proved to be non-trivial. For Pendulum and Acrobot, Inverse RL successfully identifies suitable reward functions using all features, although employing all features does not always ensure success, as demonstrated by the CartPole task. This discrepancy may be attributed to noise or spurious correlations between features and rewards, complicating the learning process. In all three environments, neither random nor first-order state features achieved a satisfactory performance level.

Nonetheless, our method attains comparable performance levels using significantly fewer features. Notably, for the Pendulum task, our method reaches benchmark results more swiftly than methods based on hand-picked features.

Subsequently, Table \ref{res} provides a quantitative comparison of our method and other baseline results against benchmark scores for the tasks in question by Stable-Baselines3 (an RL policy employing the \textit{true} reward function). We see that the proposed method achieves sufficient benchmark results in all tasks. The selection of all features succeeded in only two cases, while hand-picked features performed well in just one case. Additionally, the performances of the successful policies have been visually confirmed and the corresponding video is included in the supplementary materials. 

Figure \ref{fig:wasser} displays the 2D Wasserstein distance between expert data and testing data from the retrieved policy for the Pendulum and Acrobot environments. Although the cumulative rewards for reward functions utilizing all features and our proposed features are similar (Table~\ref{res}), the Wasserstein distance for the proposed method is considerably lower than that for employing all features in both of the tasks. This suggests a closer alignment in state distribution with the expert data. As previously mentioned, the observed discrepancy could stem from noise or spurious correlations between certain features and the rewards. This finding highlights the benefit of employing a smaller, more concise set of features for improved performance and alignment with expert behavior.

\section{Discussion}\label{sec:concl}

In this study, we introduced algorithms for generating candidate features and selecting them automatically. Our findings reveal that second-order polynomial basis functions perform effectively as feature extractors for tasks involving continuous state control. We posit that employing higher-order polynomials could align not only the mean and variance but also the higher-order statistical moments between training and testing distributions, thereby enhancing the match between these distributions.

Basis functions offer an effective and versatile method for creating complex features. Within the scope of this paper, we considered only polynomials but other basis functions could be easily integrated into our method. Indeed, we believe that our approach is compatible with different basis functions, including but not limited to radial-basis functions (RBFs)\cite{sutton2018reinforcement} and Fourier series~\cite{dexter2021inverse,konidaris2011value}. Our approach capitalizes on the correlation between the trajectory's probability and its feature expectation, allowing for the inclusion of a broad spectrum of feature functions. We anticipate that our algorithm can be effectively expanded to integrate a mixture of different basis functions. Notably, even with the addition of diverse basis functions, the complexity of our algorithm would increase linearly with the number of features.
    
\begin{figure}[t]
    \centering
        \centering
        \includegraphics[width=\linewidth]{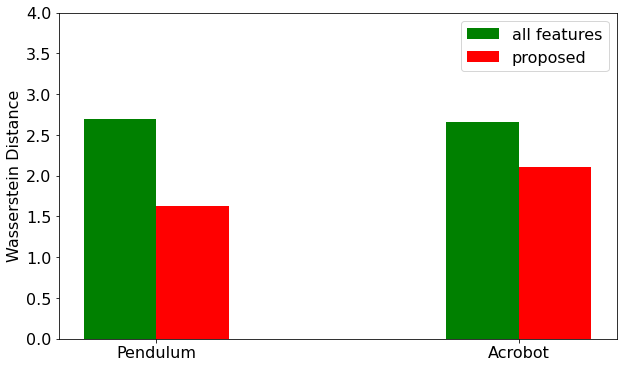}
        \label{fig:sub1}
    \hfill
    \caption{2D Wasserstein distance between training and testing data for the Pendulum and Acrobot environments.}
    \label{fig:wasser}
\end{figure}

Employing a more concise set of features enhances the generalization and robustness of inferred rewards by minimizing the impact of noise and spurious correlations. We believe that another benefit of using a compact set of features is a reduction in the complexity of interpreting rewards and getting insights into how the policy is trying to solve the task. 

We anticipate that the method introduced here could also be beneficial for tasks such as preference learning, especially in scenarios where the task setup remains the same but involves different experts. In such instances, we believe that the behavior of various experts might be described by a common set of features. However, the distinction in their modes of behavior would be reflected through the differing weights assigned to these features.

\section{Conclusion}\label{sec:concl2}
In this study, we introduce an efficient feature selection algorithm that utilizes polynomial basis functions for constructing reward functions in inverse reinforcement learning, demonstrating its effectiveness across three distinct environments. By automatically selecting the most relevant features from a candidate set, our method simplifies the reward learning process, enhancing model interpretability and the fidelity of expert behavior replication. Highlighting the potential for further refinement, we plan to explore alternative basis functions, including Fourier series, radial basis functions, and higher-order polynomials, to expand our approach's applicability and precision. This exploration is anticipated to enhance the adaptability and accuracy of our feature selection algorithm, offering new avenues for refinement in the field of imitation learning.

\section*{ACKNOWLEDGMENT}
The authors would like to acknowledge the computational resources provided by the Aalto Science-IT project and Aalto Research Software Engineering service. 


\bibliographystyle{IEEEtran}
\bibliography{main}

\end{document}